\title{\LARGE \bf A Null-space based Approach for a Safe and Effective Human-Robot Collaboration}
\author{Federico Benzi and Cristian Secchi
\thanks{F. Benzi and C. Secchi are
    with the Department of Sciences and Methods of Engineering,
    University of Modena and Reggio Emilia, Italy  {\tt\small
      \{federico.benzi,cristian.secchi\}@unimore.it}}}
\date{February 2021}
\newcommand{\fb}[1] {{\color{black}#1}}
\newtheorem*{remark}{Remark}
\begin{document}

\maketitle
\begin{abstract}
    During physical human robot collaboration, it is important to be able to implement a time-varying interactive behaviour while ensuring robust stability. Admittance control and passivity theory can be exploited for achieving these objectives. Nevertheless, when the admittance dynamics is time-varying, it can happen that, for ensuring a passive and stable behaviour, some spurious dissipative effects have to be introduced in the admittance dynamics. These effects are perceived by the user and degrade the collaborative performance.  In this paper we exploit the task redundancy of the manipulator in order to harvest energy in the null space and to avoid spurious dynamics on the admittance. The proposed architecture is validated by simulations and by experiments onto a collaborative robot.

\end{abstract}
\section{Introduction}

Kinematically redundant robots are endowed with more degrees of freedom (DOFs) than those required for the execution of a specific task. This extra dexterity of theirs can be exploited for implementing additional sub-tasks alongside the main one. This capability is pivotal in a Human Robot Collaboration (HRC) scenario, in which the applications often require high flexibility and dexterity.

In order to exploit the kinematic redundancy \cite{siciliano2010robotics}, it is possible to encode the tasks of the robot as functions which depend on the robot configuration, namely the task function \cite{samson1991robot}. We can then map the resulting control input from the task space into the joint space using the (pseudo-) inverse of the Jacobian, as shown in \cite{Khatib1987IJRA} and \cite{slotine1991general}. This method has been extensively used for the likes of redundant manipulators \cite{Chiaverini1997TRO}, humanoids \cite{Mansard2009TAC} and high-dimensional robots \cite{Flacco2013RSJ}.

Furthermore, it is possible to consider redundancy with respect to a specific task. Exploiting the concept of task redundancy, it is possible 
to simultaneously implement secondary tasks which do not affect the execution of the main one. This is often carried out by exploiting the null space projection of the Jacobian, as in \cite{Siciliano1991book,ott2015prioritized,escande2014hierarchical}.

The concept of redundancy has been also exploited in the control of interaction between the robot and the environment. In particular, control strategies for implementing impedance control on redundant robots have been proposed in \cite{ott2008cartesian}, \cite{albu2003cartesian} and \cite{lippiello2012exploiting}.

Admittance control \cite{Villani2016} is used for controlling the interaction in velocity controlled robots. The main idea is to control the robot to mimic a desired physical dynamics, i.e. the admittance dynamics. Since physical dynamics are passive,  a robustly stable interaction with the environment is guaranteed, namely the system is stable both in free motion and in interaction phase (see e.g. \cite{Secchi2007}).

While operating using admittance control, the interactive behaviour is entirely determined by the choice of the admittance parameters, such as inertia and stiffness. A specific set of parameters might not be optimal for the whole duration of the task \cite{Dimeas2016ToH}.

However, varying online the dynamic parameters can lead to the loss of passivity \cite{Ferraguti2015TRO} and thus to the implementation of unstable behaviours for HRC \cite{CTLandi2019IJRR}.

This limitation onto the flexibility of admittance control is due to the embodiment of passivity into the admittance dynamics. Enforcing the passivity of the specific motion poses much tighter restrictions compared to simply requiring the passivity of the \fb{overall} controlled system. Passivity itself, in fact, does not depend on any given interaction dynamics, only onto the energy flow in the system.

Following this line, the Time Domain Passivity Approach (TDPA) keeps track of the energy flow through a power port of the system and exploits a variable damper to compensate for possible  energy production \cite{ryu2004stability,kim2007landing}. However, the passivating damper acts as a spurious dynamics that affects the desired dynamic behavior. The problem of distributing the energy dissipated over the different subspaces is tackled in \cite{ott2011subspace}. Nevertheless, due to the lack of an exciting input in the null space, the amount of energy that can be dissipated is limited and it can be  necessary to resort to a damping in the main task. 

A different strategy was adopted in \cite{Ferraguti2015TRO}, \cite{CTLandi2019IJRR} and \cite{benzi22tro}, in which a variable admittance controller was passively implemented by leveraging energy tanks. Here, energy tanks \cite{Ferraguti2015TRO} have been exploited in order to unbind passivity from the specific admittance dynamics. This allows to maximise the flexibility of the application while maintaining a passive energy balance. 
Nevertheless, implementing a time-varying admittance dynamics can deplete the tank. Thus, it can happen that no more energy is available for implementing the desired behavior. 
Different solutions have been proposed in literature for facing this issue. Authors of \cite{Ferraguti2015TRO} and \cite{riggio2018use} design a specific additional damping term for refilling the tank whenever passivity is threatened, \cite{CTLandi2019IJRR} saturates the amount of energy which can be extracted from the tank at a given time, while \cite{benzi22tro} passively approximates the desired behaviour whenever the current energy level is too low for implementing it. All of these solutions involve an approximation of the desired dynamics, which can compromise the effectiveness and efficiency of the application. 
In order to achieve a variable admittance free of spurious limiting effects, it is important to separate the tank refill procedure from the desired interaction dynamics.

In this paper, we propose a control architecture which exploits task redundancy and energy tanks in order to faithfully reproduce a desired behaviour while preserving a passive energy balance. By means of kinematic decomposition and null space projection, we generate motions which are kinematically decoupled from the main admittance task, and use them to store energy in the tank for controlling the energy flow in a non-invasive way in terms of task execution. This allows to implement a variable admittance controller in a passive way without altering the implementation of the desired behaviour.

Even if energy tanks have been previously exploited in a null-space scenario in \cite{dietrich2015passivation}, they were utilised for passifying the dynamics implemented in a hierarchy of null spaces; thus, the main goal was using the tank for guaranteeing the passivity of the null space.

In this paper, we make use of the dynamics in the null space for harvesting energy and refilling the tank. This novel approach exploits the null space of the task for keeping the tank at an energy level sufficient for ensuring both performances and the overall passivity of the system.

Thus, the contributions of this paper are:
\begin{itemize}
    \item A non-invasive control architecture capable of passively implementing a variable admittance controller
    \item A technique for exploiting the task redundancy of the robot in order to refill the energy tank without compromising the execution of the main task
    \item An experimental validation of the proposed architecture, both in simulation and onto a real collaborative robot
\end{itemize}
The paper is organised as follows: in Sec. \ref{sec:problem} the main problem addressed in this paper is formulated. In Sec. \ref{sec:tanks} energy tanks are presented, alongside their role in the implementation of a variable admittance controller. In Sec. \ref{sec:null_refill} the exploitation of task redundancy for storing energy in the tank is presented. Finally, in Sec. \ref{sec:experiments} the resulting architecture is experimentally validated onto a UR10e manipulator, while  in Sec. \ref{sec:conclusions} conclusions are drawn and future works are discussed.

\section{Problem Statement}
\label{sec:problem}

Consider a fully actuated and velocity controlled $n$-DOFs manipulator that can be modelled by the following kinematic model: 
\begin{equation}\label{eq: robot_model}
    \dot{x} = J(q)u
\end{equation}
\fb{where $q\in\mathbb{R}^n$ is the vector of joint variables, $u \in \mathbb{R}^{n}$ is the control input, $x\in\mathbb{R}^m$ the pose of the end-effector and $\dot{x}\in\mathbb{R}^m$ its velocity, both expressed in operational space, while $J(q) \in \mathbb{R}^{m \times n}$ is the Jacobian matrix.}

We consider a human and a robot physically collaborating for executing a task. The physical interaction between the robot and the human is regulated by means of a variable admittance controller in order to make the collaboration safe and efficient \cite{CTLandi2019IJRR}. The admittance dynamics governs only a given subset of the workspace of dimension $m_1 < m$, meaning that the interaction \fb{task takes place only onto certain operational directions}. For tasks of simple nature, this is most often sufficient. 

The interaction task is given by $x_1=f_1(q)$, where ${x_1\in\mathbb{R}^{m_1}}$ and $f_1$ is the forward kinematic mapping that relates the task variable with the joint variables. By differentiating, it follows that
\begin{equation} \label{eq: diff_task_1}
     \dot{x}_1 = J_1(q)\dot{q}\,,\,J_1(q)= \frac{\partial f_1}{\partial q}
 \end{equation}
\fb{where $J_1 \in \mathbb{R}^{m_1 \times n}$ is the analytic task Jacobian, $\dot{q} \in\mathbb{R}^n$ are the joint velocities} and $\dot x_1$ are the \fb{operational task velocities} along the considered directions.
We now assume to have a certain degree of kinematic redundancy w.r.t. the task, i.e. task redundancy, meaning that $m_1 < n$. Then, the control input $u = \dot{q}$ implementing the task in \eqref{eq: diff_task_1} can be synthesized as
\fb{
\begin{equation}\label{eq: weighted_pinv_}
    \dot{q} = J_1^{+}(q)\dot{x}_1 + (I - J_1^{+}(q)J_1(q))z
\end{equation}
where $J_1^{+} \in \mathbb{R}^{n \times m_1}$ is the Moore-Penrose pseudo-inverse \cite{doty1993theory} of $J_1$, while $z \in \mathbb{R}^{n}$ are joint velocities which get projected in the null space of the task.}

 \begin{figure}[tb]\label{fig: adm_scheme}
     \centering
     \includegraphics[width=0.9\columnwidth]{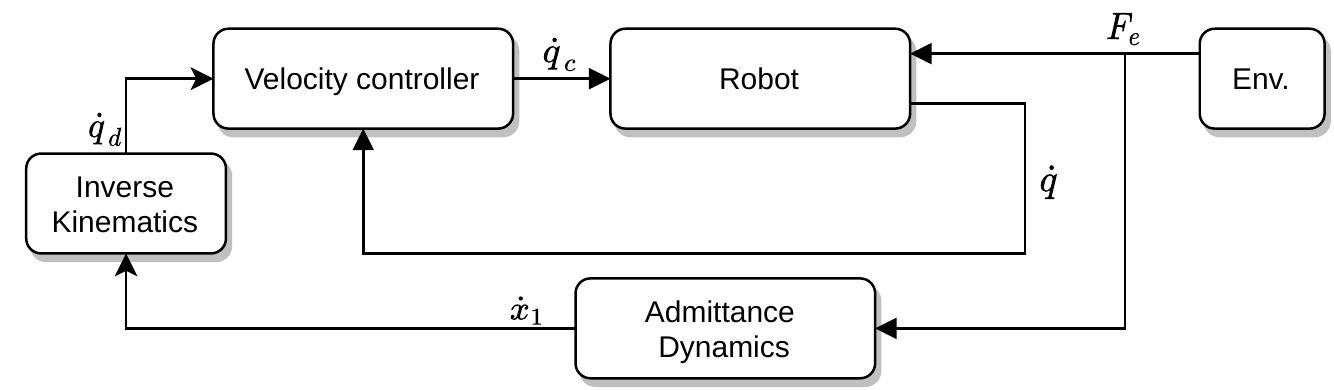}
     \caption{The admittance control architecture. We define as $\dot{q}_d$ the commanded input to the velocity controller, while $\dot{q}_c$ represent the velocity set point imposed to each joint by the robot actuation. We assume that the low-level controllers of the robot are such that $\dot{q}_d = \dot{q}$.}
 \end{figure}
The admittance controller architecture is portrayed in Fig.~1 and the controlled dynamic behaviour is represented by:
\begin{equation}\label{eq: admittance model}
    M(x_1,t)\ddot{x}_1 +  D(x_1)\dot{x}_1  = G F_{e}
\end{equation}
 where $M(x_1,t),D(x_1,t)\in\mathbb{R}^{m_1\times m_1}$ are the symmetric positive definite inertia matrix and damping matrix. $F_e\in\mathbb{R}^m$ is the force the human is applying on the end-effector. $G\in\mathbb{R}^{m_1\times m}$ is a selection matrix that extracts the components of $F_e$ corresponding to the admittance control directions. The admittance dynamics \eqref{eq: admittance model} is the most commonly used in human robot collaboration. All the results in the paper can be easily extended to more complex dynamics.

In order to cope with the loss of passivity due to the variability of the admittance parameters, a tank based admittance control will be implemented.  Our goal is to exploit the redundancy of the robot with respect to the task for refilling the energy tank. Specifically, we aim at exploiting the null-space motions for dissipating the required energy, without altering the desired execution of the main task. This allows us to obtain a flexible and robust interactive behaviour without compromising the nominal execution of the task, thus enhancing the performance of the application.

\section{Tank-based admittance control}
\label{sec:tanks}
In this section we present how to exploit energy tanks \cite{Ferraguti2015TRO} for reproducing a variable admittance dynamics on the robot primary task following the approach presented in \cite{CTLandi2019IJRR}.

Energy tanks are virtual energy reservoirs, which store the energy dissipated by the system for future usage. They can generally be represented by
\begin{equation}
    \label{eq:tankeqs}
    \begin{cases}
    \dot{x}_t(t) = u_t(t) \\
    y_t(t) = \frac{\partial T}{\partial x_t} = x_t(t)
    \end{cases}
\end{equation}
in which $x_{t}(t) \in \mathbb{R}$ is the state of the tank, $(u_t(t), y_t(t)) \in \mathbb{R} \times \mathbb{R}$ is the power port handling the energy flowing through the tank, while 
\begin{equation} \label{eq: tank_energy}
    T(x_{t}(t)) = \frac{1}{2} x_{t}(t)^2    
\end{equation}
is the energy stored in the tank.

In particular, this energy can be reused for passively implementing any desired control action.

The non-passive admittance dynamics \eqref{eq: admittance model} can be passively implemented by augmenting it with an energy tank as follows:
\begin{equation}\label{eq: vadm_tank}
    \begin{cases}
        M(x_1,t)\ddot{x}_1 + D(x_1,t)\dot{x}_1 = F_{1} \\
        \dot{x}_{t}(t) = \frac{\phi(t)}{x_{t}(t)}P_D(t) - \frac{\gamma(t)}{x_{t}(t)}P_M(t)
    \end{cases}
\end{equation}
in which $F_1 = GF_e$, $F_1 \in \mathbb{R}^{m_1}$, while
\begin{equation}\label{eq: powers_def}
    P_D(t)=\dot{x}_1(t)^T D(t) \dot{x}_1(t) \quad P_M(t)= \tfrac{1}{2}\dot{x}_1(t)^T \dot{M}(t) \dot{x}_1(t) \footnote{The dependence on $x_1$ is omitted here for readability reasons}
\end{equation}
$P_D(t)\geq 0$ is the power dissipated by the admittance dynamics, that fills the tank. $P_M(t)$ is sign indefinite and it is the power required for the varying the inertia; it can either refill the tank (passive variation) or extract energy from the tank (non passive variation).

We define two additional terms $\phi(t)$ and $\gamma(t)$ to enable/disable the energy storing process. This serves to limit the energy intake by inserting an upper bound, namely $T(x_t(t)) \leq \overline{T}$. In fact, an excessive amount of stored energy in the tank may lead to the implementation of practically unstable behaviours \cite{lee2010passive}. This upper bound is application dependant and is ensured by the presence of $\phi(t) \in \{0,1\}$ and $\gamma(t) \in \{0,1\}$, whose values are defined as 
\begin{equation}\label{eq: phi_def}
    \phi = 
    \begin{cases}
    1 \quad  \text{if $T(x_t(t)) \leq \overline{T}$} \\
    0 \quad  \text{otherwise}
    \end{cases}
\end{equation}
which regulates the storage of dissipated energy, and 
\begin{equation}\label{eq: gamma_def}
    \gamma   = 
    \begin{cases}
    0 \quad \text{if $\dot{M}(t) \leq 0$} \\
    1 \quad \text{otherwise}
    \end{cases}
\end{equation}
which always allows energy to be extracted from the tank when $\dot{M}(t) > 0$, while blocking the injection if $\dot{M}(t) \leq 0$.

In order to avoid singularities in \eqref{eq: vadm_tank}, it is necessary that $x_t\ne 0$, i.e. that some energy is always present in the tank. Formally, it is necessary that $T(x_t(t)) \geq \underline{\varepsilon}>0$ for all $t\geq 0$.
Using \eqref{eq: tank_energy} with \eqref{eq:tankeqs} and \eqref{eq: vadm_tank}, it is easy to show that
\begin{equation}\label{eq: tank_energy_flow}
    \dot{T}(x_t(t)) = x_t(t) \dot{x}_t(t) = \phi(t)P_D(t) - \gamma (t)P_M(t)
\end{equation}
Using the augmented dynamics in \eqref{eq: vadm_tank} together with \eqref{eq: tank_energy_flow} it is possible to prove 
\newtheorem{prop}{Proposition}
 \begin{prop} \label{prop: always passive}
 	If $T(x_{t})\geq \underline{\varepsilon}$ for all $t\geq0$, then the system \eqref{eq: vadm_tank} remains passive with respect to the pair $(F_1(t), \dot{x}_1(t))$.
 \end{prop}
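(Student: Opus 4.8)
The plan is to prove passivity directly from the definition, by exhibiting a suitable storage function and showing that its time derivative is bounded above by the supplied power $F_1(t)^T\dot{x}_1(t)$. The natural candidate is the total energy of the augmented system, namely the sum of the kinetic energy of the admittance dynamics and the energy stored in the tank,
\begin{equation}\label{eq: storage_candidate}
    W(t) = \tfrac{1}{2}\dot{x}_1(t)^T M(x_1,t)\dot{x}_1(t) + T(x_t(t)).
\end{equation}
First I would differentiate $W$ along the trajectories of \eqref{eq: vadm_tank}. Passivity with respect to the port $(F_1,\dot{x}_1)$ amounts to establishing the dissipation inequality $\dot{W}(t) \leq F_1(t)^T \dot{x}_1(t)$, so the whole argument reduces to a careful computation of $\dot W$ and a sign analysis of the leftover terms.

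The main computation proceeds by expanding the kinetic-energy term. Differentiating gives $\dot{x}_1^T M \ddot{x}_1 + \tfrac{1}{2}\dot{x}_1^T \dot{M}\dot{x}_1$; I would then substitute $M\ddot{x}_1 = F_1 - D\dot{x}_1$ from the first line of \eqref{eq: vadm_tank} to obtain $\dot{x}_1^T F_1 - \dot{x}_1^T D \dot{x}_1 + \tfrac12 \dot{x}_1^T \dot{M}\dot{x}_1$. Using the definitions in \eqref{eq: powers_def}, this is exactly $F_1^T\dot{x}_1 - P_D(t) + P_M(t)$. For the tank term I would invoke \eqref{eq: tank_energy_flow}, which already gives $\dot{T} = \phi(t)P_D(t) - \gamma(t)P_M(t)$. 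Adding the two contributions yields
\begin{equation}\label{eq: Wdot_expr}
    \dot{W}(t) = F_1^T\dot{x}_1 - \bigl(1-\phi(t)\bigr)P_D(t) + \bigl(1-\gamma(t)\bigr)P_M(t).
\end{equation}

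It then remains to show that the non-supply terms in \eqref{eq: Wdot_expr} are non-positive. Since $\phi(t)\in\{0,1\}$ we have $1-\phi(t)\geq 0$, and $P_D(t)\geq 0$ because $D$ is positive definite, so the term $-(1-\phi)P_D$ is always $\leq 0$. The delicate term is $(1-\gamma)P_M$: when $\gamma=1$ this vanishes, and when $\gamma=0$ the definition \eqref{eq: gamma_def} guarantees that $\dot{M}(t)\leq 0$, whence $P_M(t)=\tfrac12\dot{x}_1^T\dot{M}\dot{x}_1 \leq 0$; thus $(1-\gamma)P_M \leq 0$ in either case. This is precisely where the switching logic for $\gamma$ earns its keep, and I expect it to be the only genuinely non-routine step in the argument. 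Combining these observations with \eqref{eq: Wdot_expr} gives $\dot{W}(t)\leq F_1(t)^T\dot{x}_1(t)$, and since $W$ is bounded below (the hypothesis $T(x_t)\geq\underline{\varepsilon}>0$ keeps the tank energy positive and the kinetic term is non-negative), $W$ is a valid storage function. The system is therefore passive with respect to $(F_1,\dot{x}_1)$, which completes the proof. The role of the standing assumption $T(x_t)\geq\underline{\varepsilon}$ is mainly to ensure the augmented dynamics \eqref{eq: vadm_tank} is well defined (no division by $x_t=0$) along the whole trajectory, so that the computation of $\dot W$ is legitimate.
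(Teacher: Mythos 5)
Your proof is correct and takes essentially the same approach as the paper: the paper defers Proposition 1 itself to \cite{CTLandi2019IJRR}, but its in-text proof of the null-space extension (Proposition 2) is exactly your argument — the storage function built from the admittance kinetic energy plus the tank energy (there augmented with $H_2(v_2)$), differentiation along the closed-loop dynamics, and the sign analysis of $-(1-\phi(t))P_D(t)$ and $(1-\gamma(t))P_M(t)$ via the switching logic of \eqref{eq: phi_def} and \eqref{eq: gamma_def}. If anything, your handling of the $\gamma(t)=0$ case (concluding $P_M(t)\leq 0$ from $\dot{M}(t)\leq 0$, rather than $P_M(t)=0$ as the paper loosely states) is slightly more careful.
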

The proof can be found in \cite{CTLandi2019IJRR}.

Thus, as long as a sufficient amount of energy is stored in the tank, it is possible to reproduce any desired behaviour, even non-physical ones, in a passive way. This allows us to vary online the dynamic parameters of the admittance controller while preserving a robustly stable interaction with the environment.

Nevertheless, the implementation of non passive behaviors (e.g. inertia variation) depletes the tank. Thus, it can happen that there is not enough energy in the tank for passively implementing the desired behavior. In this case, see e.g.\cite{Ferraguti2015TRO}, the damping of the admittance dynamics is augmented for refilling the tank. Such an action is effective but the price to pay is \fb{an unwanted dynamic behavior (additional damping) that is perceived by the user, undermining the quality of the interaction.} This problem will be solved by moving the damping injection in the null space, as shown in Sec.~\ref{sec:null_refill}.

\section{Null Space Refill}
\label{sec:null_refill}
In this section we will present the proposed technique for exploiting task redundancy in order to refill the energy tank without altering the nominal behaviour of the robot.

First, we need to ensure that secondary motion does not affect the execution of the main task, i.e. the two motions must be kinematically decoupled. To ensure this, we perform the following coordinate transformation\cite{park1999dynamical,michel2020passivity} onto \eqref{eq: weighted_pinv_}: 
\begin{align}\label{eq: kin_decoupl}
    \dot{q} &= \begin{bmatrix}
                    J_1(q)^{+} \, Z(q)^{T}
                \end{bmatrix}
                \begin{bmatrix}
                    \dot{x}_1\\
                    v_2
                \end{bmatrix}
\end{align} 
\fb{in which $Z(q) \in \mathbb{R}^{m_2 \times n}$ is a full row rank null space base matrix of $J_1(q)$ and can be easily obtained via singular value decomposition (SVD) of $J_1(q)$. }Finally, the self motion velocity $v_2 \in \mathbb{R}^{m_2}$ is computed as
\begin{equation}\label{eq: self_motion}
    v_2 = N(q)\dot{q}
\end{equation}
in which $N(q)= (Z(q)Z(q)^T)^{-1}Z(q)$. We can now go ahead and define the extended Jacobian of the system as
\begin{align}\label{eq: extended_jacobians}
    J_e(q) &= \begin{bmatrix}
                J_1(q) \\
                N(q)
              \end{bmatrix}
              \, , \,
    J_e(q)^{-1} = 
              \begin{bmatrix}
                J_1(q)^{+} \,\,
                Z(q)^T
              \end{bmatrix}
\end{align}
and the extended task-space velocities as
\begin{align}\label{eq: extended_velocities}
    \dot{x}_e &= \begin{bmatrix}
                    \dot{x}_1 \\
                    v_2
                \end{bmatrix}
                = J_e(q) \dot{q}
\end{align}
and finally compute the velocity commands in the joint space implementing the combined motions as
\begin{equation}\label{eq: joint_space_sol}
         \dot{q} = J_e(q)^{-1}\dot{x}_e
\end{equation}
In this way, we achieved the kinematic decoupling between the two velocities $\dot{x}_1$ and $v_2$, namely $J_1(q) Z(q)^T v_2 = 0$ and $N(q) J_1(q) ^{+}\dot{x}_1 =0$. A more in-depth explanation of this procedure can be found in \cite{michel2020passivity}.

Following this procedure, we can then implement a secondary task whose execution does not interfere with the primary one, exploiting the task redundancy of the robot.
Since our goal is the implementation of a time-varying dynamics as the main admittance task, we can design the secondary task as a supplement to the main one, aimed at injecting energy in the tank whenever a refill is required.

We achieve this by introducing an extra dynamics onto the secondary motion $v_2$. In particular, we can endow this dynamics with a damper and utilise the energy dissipated by the secondary motion for refilling the tank. In this way, there is no need to introduce spurious terms onto the main dynamics, which can be faithfully reproduced.

We can perform this by further augmenting the dynamics in \eqref{eq: vadm_tank} as:
\begin{equation}\label{eq: vadm_tank_null}
    \begin{cases}
        M(x_1,t)\ddot{x}_1 + (D(x_1,t) + \Psi(t))\dot{x}_1 = F_{1} \\
        \dot{x}_{t}(t) = \frac{\phi(t)}{x_{t}(t)}(P_D(t) + P_N(t)) - \frac{\gamma(t)}{x_{t}(t)}P_M(t) \\
        \dot{v}_2 = -D_N(t)v_2 + F_{null}
    \end{cases}
\end{equation}
in which $D_N(t) \in \mathbb{R}^{m_2 \times m_2}$, $D_N(t) \geq 0$ is the damping term for the null space motion, $F_{null} \in \mathbb{R}^{m_2}$ is a control force acting onto the null space coordinates, while
\begin{equation}\label{eq: p_null}
    P_{N}(t) = v_2(t)^T D_N(t) v_2(t)
\end{equation}
is the energy dissipated by the damped null motion. 
Through this augmentation, we obtain that
\begin{equation}\label{eq: tank_power_augm}
    \dot{T}(x_t(t)) = \phi(t)(P_D(t) + P_N(t)) - \gamma (t)P_M(t)
\end{equation}
which shows that the energy dissipated by the secondary dynamics can be utilised for refilling the tank and for implementing a non passive inertia variation. In this way, we can passively implement a variable admittance controller without introducing spurious elements in the desired admittance dynamics.

The force $F_{null}(t)\in \mathbb{R}^{m_2}$ is activated when the energy of the tank drops below a threshold $T^*<\bar T $ and is defined as
\begin{equation}\label{eq: null_force_def}
    F_{null_{i}}(t) = 
    \begin{cases}
    h_i(t) &\quad \text{if $T((t)) < T^*$} \\
    0 &\quad \text{otherwise}
    \end{cases} \quad i=\{1,\dots, m_2\}
\end{equation}
in which $h_i(t)$ is appropriately designed for exciting the null dynamics in order to produce motion and dissipate energy.

The time-varying positive definite damping matrix implemented in the null dynamics is defined as $D_N(t)=d_N(t)I_{m_2}$, where 
\begin{equation}\label{eq: null_damp_def}
    d_{N}(t) = 
    \begin{cases}
        max\left(\frac{\gamma P_M -  P_D}{v_2^T v_2}, \delta \right) \quad &\text{if}\,\, T(t)=\underline{\varepsilon} \,\,\text{and}\,\, v_2 \neq 0\\
        \,\,\delta &\text{otherwise}
    \end{cases}
\end{equation}
where $\delta >0$ is the minimum damping that is always implemented in the null dynamics. \fb{The presence of a non zero damping makes the null space dynamics output strictly passive and, therefore, stable \cite{van2000l2}}.

Using the null space damping dynamics in \eqref{eq: vadm_tank_null} and in \eqref{eq: null_damp_def} it is possible to  ensure that the tank is never depleted if  $v_2 \neq 0$. 
In fact, as long as $T(t)>T^*$, $F_{null}=0$ and the null space energy harvesting is not enabled since the energy dissipated by the main task is enough for refilling the tank. When $T(t)\leq T^*$, the null space dynamics is activated and an extra dissipated power $P_N(t)=v_2^Td_N(t)v_2=v_2^T\delta v_2$ is fed into the tank for further refilling it,  without disturbing the main task. When the tank reaches its minimum value $\underline{\varepsilon}>0$, using \eqref{eq: null_damp_def}, the damping of the null dynamics is tuned in such a way to have $P_N(t)=v_2^Td_N(t)v_2\geq \gamma P_M - P_D$. Then, from \eqref{eq: tank_power_augm} we have that $\dot{T}\geq 0$ and, therefore, an increase of the energy in the tank. Thus, the tank can never go below the minimum energy threshold, i.e. the tank can never deplete. 

By properly choosing the damping parameter $\delta>0$, it is possible to keep the energy of the tank above $\underline{\varepsilon}$ and, therefore, to avoid the velocity dependent damping reported in \eqref{eq: null_damp_def}. Nevertheless, it can happen that the minimum is reached and, if $v_2=0$ the necessary control action cannot be implemented. Energy then needs to be recovered by adding an extra damping to the main task. This is done through the term $\Psi=\psi I_{m_1}$ where
\begin{equation}\label{eq: damp_inj_main}
    \psi (t)=
    \begin{cases}
        max(\frac{P_M-P_D}{\dot{x}_1^T\dot{x}_1},0)     &\text{if} \,\,T (t)= \underline{\varepsilon} \,\, \text{and}\, v_2 = 0, \, \dot{x}_1 \neq 0\\
        \,\,0 &\text{otherwise}
    \end{cases}
\end{equation}
Notice that, if $\dot{x}_1 = 0$, $P_M = 0$, thus no energy is extracted from the tank.
It is easy to see that this damping term plays the same role as the null space damping term when $T(t)=\underline{\varepsilon}$ with the main difference that it produces an unwanted effect on the main task. Nevertheless, if the null space dynamics is properly designed, $\Psi$ is almost never active.



Combining \eqref{eq: vadm_tank_null} with \eqref{eq: tank_power_augm} we can finally prove the following.
\newtheorem{prop2}{Proposition2}
 \begin{prop} \label{prop: always passive_null}
 	If $T(x_{t})\geq \underline{\varepsilon}$ for all $t\geq0$, then the system \eqref{eq: vadm_tank_null} remains passive with respect to the pairs $(F_1(t), \dot{x}_1(t))$ and $(F_{null}(t), v_2(t))$.
 \end{prop}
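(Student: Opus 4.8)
The plan is to exhibit a single storage function for the augmented system and to show that its time derivative is bounded above by the total power supplied through the two ports. A natural candidate combines the three energetic contributions of \eqref{eq: vadm_tank_null}, namely the kinetic energy of the main task, the tank energy, and the energy of the null-space dynamics:
\begin{equation*}
    H = \tfrac{1}{2}\dot{x}_1^T M(x_1,t)\dot{x}_1 + T(x_t) + \tfrac{1}{2}v_2^T v_2 .
\end{equation*}
Since $M$ is symmetric positive definite, $T(x_t)\geq\underline{\varepsilon}>0$ by hypothesis, and the null-space term is a quadratic form, $H$ is bounded below, which is the prerequisite for a passivity argument and mirrors the storage function used in Proposition~\ref{prop: always passive} with the extra null-space contribution.

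Next I would differentiate $H$ along the trajectories of \eqref{eq: vadm_tank_null}. For the kinetic term I would substitute $M\ddot{x}_1 = F_1 - (D+\Psi)\dot{x}_1$ and collect the resulting terms using the definitions in \eqref{eq: powers_def}, obtaining $\tfrac{d}{dt}\bigl(\tfrac12\dot{x}_1^T M\dot{x}_1\bigr) = F_1^T\dot{x}_1 - P_D - \dot{x}_1^T\Psi\dot{x}_1 + P_M$. For the tank term I would use directly the energy-flow identity \eqref{eq: tank_power_augm}. For the null-space term, differentiating $\tfrac12 v_2^T v_2$ and inserting $\dot v_2 = -D_N v_2 + F_{null}$ yields $F_{null}^T v_2 - P_N$, with $P_N$ as in \eqref{eq: p_null}. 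Summing the three contributions, the supplied powers $F_1^T\dot{x}_1$ and $F_{null}^T v_2$ appear explicitly, and the remaining terms collapse to the residual $(\phi-1)(P_D+P_N) + (1-\gamma)P_M - \dot{x}_1^T\Psi\dot{x}_1$.

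It then remains to show that this residual is nonpositive, which is where the construction of the gating signals does the work. The term $-\dot{x}_1^T\Psi\dot{x}_1$ is nonpositive because $\Psi=\psi I_{m_1}$ with $\psi\geq0$ from \eqref{eq: damp_inj_main}. Since $\phi\in\{0,1\}$ we have $\phi-1\leq0$, and both $P_D\geq0$ and $P_N=v_2^T D_N v_2\geq0$ (as $D_N\geq0$), so $(\phi-1)(P_D+P_N)\leq0$. The main obstacle is the sign-indefinite term $(1-\gamma)P_M$, which I expect to resolve by the case split encoded in \eqref{eq: gamma_def}: when $\dot M\leq0$ we have $\gamma=0$ but $P_M=\tfrac12\dot{x}_1^T\dot M\dot{x}_1\leq0$, so $(1-\gamma)P_M=P_M\leq0$; when $\dot M>0$ we have $\gamma=1$, so the factor $(1-\gamma)$ vanishes. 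In either case the term is nonpositive. Hence $\dot H\leq F_1^T\dot{x}_1 + F_{null}^T v_2$, which is exactly passivity with respect to the pairs $(F_1,\dot{x}_1)$ and $(F_{null},v_2)$; as in Proposition~\ref{prop: always passive}, the role of the hypothesis $T(x_t)\geq\underline{\varepsilon}$ is to keep $x_t\neq0$ so that \eqref{eq: vadm_tank_null} and the identity \eqref{eq: tank_power_augm} remain well defined throughout.
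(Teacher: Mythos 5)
Your proof is correct and takes essentially the same route as the paper's: the same storage function $H_1+T+H_2$, the same differentiation along \eqref{eq: vadm_tank_null}, and the same sign analysis of the gated terms $(1-\phi)(P_D+P_N)$ and $(1-\gamma)P_M$, concluding with the passivity inequality for the two ports. If anything, you are slightly more careful than the paper itself, which drops the nonpositive term $-\dot{x}_1^T\Psi\dot{x}_1$ from its expression for $\dot{S}$ and imprecisely writes that $\dot{M}(t)\leq 0$ implies $P_M=0$ (it should be $P_M\leq 0$, exactly as you argue).
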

\begin{proof}
Let us consider the following positive storage function
\begin{equation} \label{eq: storage_function}
    S(\dot{x}_1(t), x_t(t), v_2(t)) = H_1(\dot{x}_1(t)) + T(x_t(t)) + H_2(v_2(t))
\end{equation}
in which $H_1(\dot{x}_1(t))$ is defined as 
\begin{equation} \label{eq: H1_function}
    H_1(\dot{x}_1(t)) = \frac{1}{2}\dot{x}_1^T(t) M(t) \dot{x}_1(t)
\end{equation}
while $H_2(v_2(t))$ as
\begin{equation} \label{eq: H2_function}
    H_2(v_2(t)) = \frac{1}{2}v_2^T(t) v_2(t)
\end{equation}
and $T(x_t(t))$ as in \eqref{eq: tank_energy}. From now onward, we indicate as $S(t)$, $H_1(t)$, $H_2(t)$ and $T(t)$ the values of $S(\dot{x}(t), x_t(t))$, $H_1(\dot{x}_1(t))$, $H_2(v_2(t))$ and $T(x_t(t))$ at time $t$ for the sake of readability. By differentiating, we obtain:
\begin{multline} \label{eq: dot_s}
    \dot{S}(t) = \dot{H}_1(t) + \dot{T}(t) + \dot{H}_2(t)= \dot{x}_1^T F_1 - (1 - \phi(t))P_D(t)+ \\-(1 - \phi(t)) P_N(t) + (1 - \gamma(t))P_M(t) + v_2^T F_{null} 
\end{multline}
from which, since $\phi(t) \in \{ 0,1\}$, $P_D(t) \geq 0$, $P_N(t) \geq 0$, we have that
\begin{multline}
    \dot{x}_1^T F_1 + v_2^T F_{null}\geq \dot{H}_1(t) + \dot{T}(t) +\dot{H}_2(t) - (1 - \gamma(t))P_M(t)
\end{multline}
From \eqref{eq: powers_def} and \eqref{eq: gamma_def}, we get that if $\dot{M}(t) \leq 0$ then $P_M = 0$. Otherwise, if $\dot{M}(t) > 0$ then $(1- \gamma(t))P_M = 0$. Thus, we obtain that
\begin{equation}\label{eq: power_inequality}
    \dot{x}_1^T F_1 + v_2^T F_{null}\geq \dot{H}_1(t) + \dot{T}(t) +\dot{H}_2(t)
\end{equation}
which implies
\begin{multline}
    \int_{0}^{t}(\dot{x}_1^{T}(\tau)F_1(\tau) + v_2(\tau)^{T}F_{null}(\tau))d\tau\geq \\H_1(t) - H_1(0) + T(t) - T(0) +H_2(t) - H_2(0)\\ \geq  -H_1(0) - T(0) - H_2(0)
\end{multline}
thus proving passivity.
\end{proof}

\begin{remark}
    Notice that, even if the two motions $\dot{x}_1$ and $v_2$ are kinematically decoupled, the overall passivity is not guaranteed a priori, since the two dynamics are energetically coupled via the energy tank.  
\end{remark}

Passivity makes the overall system robust to external disturbances \cite{Secchi2007}, that can  happen both in the main and in the null space dynamics.
Passivity can only be proven with respect to the two pairs. Nevertheless, from the kinematic decoupling in \eqref{eq: kin_decoupl} we have that
 \begin{equation}\label{eq: explicit_decoupling}
     \dot{q} = J_1(q)^+\dot{x}_1 + Z(q)^{T}v_2
 \end{equation}
 Taking \eqref{eq: diff_task_1} into account, we obtain that
 \begin{equation}\label{eq: explicit_decoupling_2}
     \dot{x}_1 = J_1(q)\dot{q} + J_1(q)Z(q)^{T}v_2
 \end{equation}
 in which, from the previous kinematic decoupling, $J_1(q)Z(q)^{T}v_2 = 0$.
 In this way, the contribution of the pair $(F_{null}(t), v_2(t))$ does not affect the implementation of $\dot{x}_1(t)$. Therefore, since the two dynamics are passive w.r.t. their power ports and are passively interconnected via the energy tank, the overall passivity is preserved.
 
Thus, we are able to preserve a passive energy flow without having to introduce additional damping terms onto the main motion, guaranteeing at the same time both passivity and efficiency.
It should be noticed that the energy harvesting dynamics is a simple mass-damper dynamics and, as well known (see e.g. \cite{Secchi2007}), it is passive with respect to the pair $(F_{null},v_2)$. Thus the secondary task dynamics is bounded input/bounded output stable and, when $F_{null}=0$ it, it is asymptotically stable \cite{Secchi2007}.


\section{Simulations and Experiments}
\label{sec:experiments}
The experimental validation of the architecture has been conducted onto a Universal Robot 10e manipulator, equipped with an on-board 6-axis force/torque (F/T) sensor. Both the robot and the sensor run at a frequency of $500$Hz.

A first validation experiment was performed in a simulated environment, while an equivalent version of it was implemented on the real manipulator.
\subsection{Simulations}
\begin{figure}[tb]
  \centering
    \subfloat[\centering]{\includegraphics[width=.49\columnwidth]{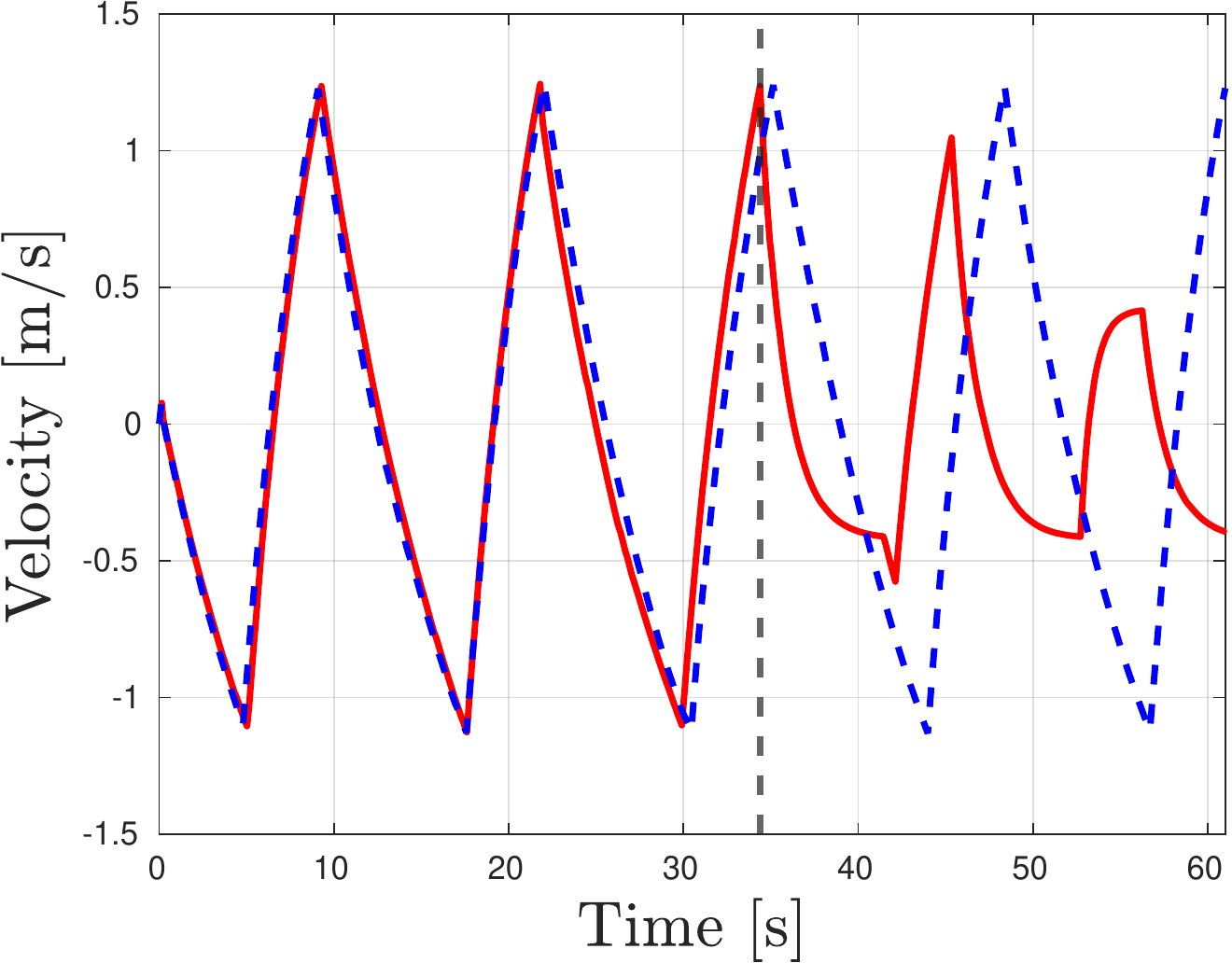}\label{fig: velocity_damp_inj} }%
    \subfloat[\centering]{\includegraphics[width=.49\columnwidth]{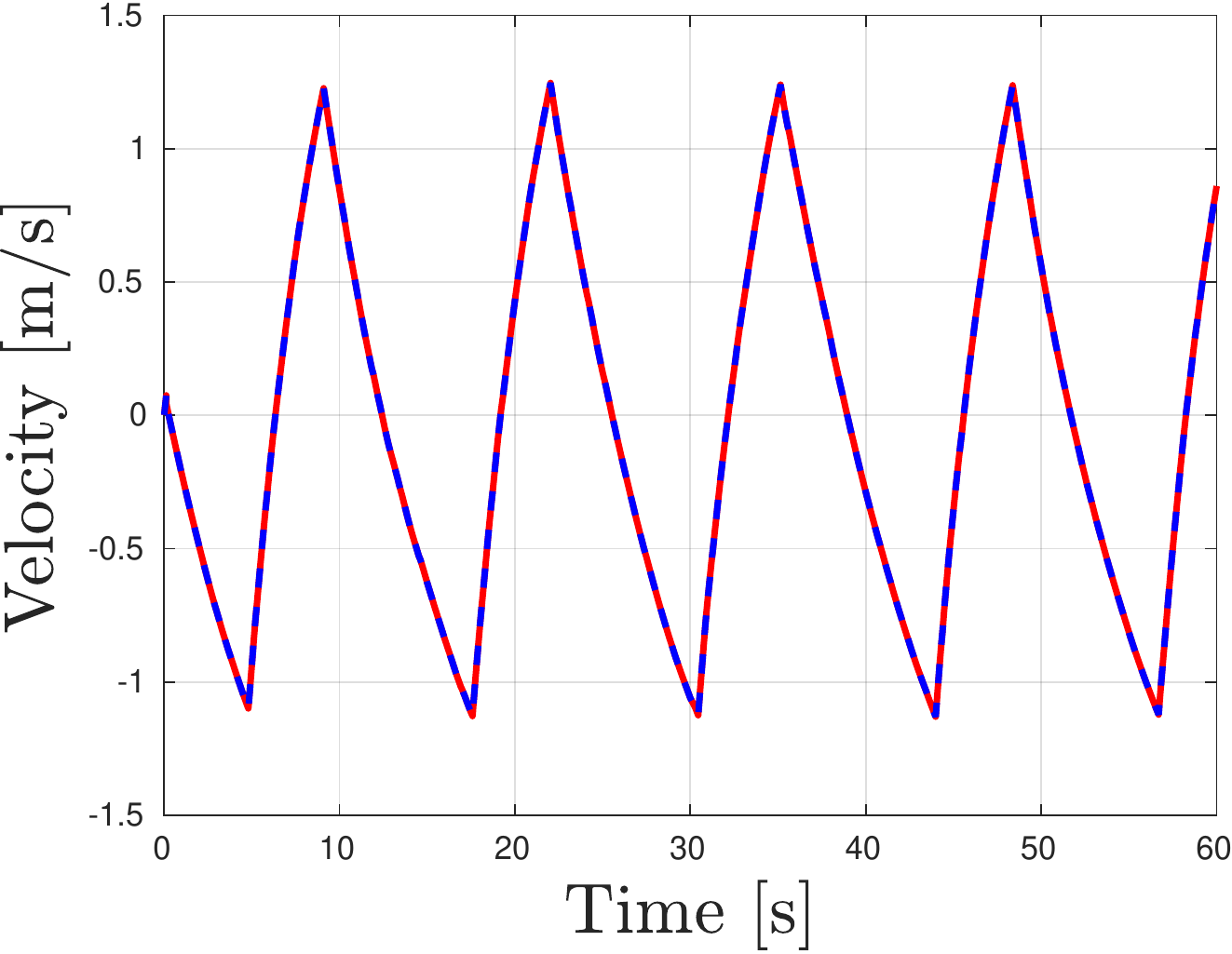} \label{fig: velocity_null_damp}}%
    \caption{Velocity of the end effector along the $z$ axis using  damping injection (a) and null-space refill (b) in \textit{red} and the desired motion in \textit{blue}. The dotted line at $t=34s$ in (a) represent the starting time of the damping injection.}%
    \label{fig: Velocity_damp_inj_vs_null}%
\end{figure}
\begin{figure}[tb]
  \centering
    \subfloat[\centering]{\includegraphics[width=.48\columnwidth]{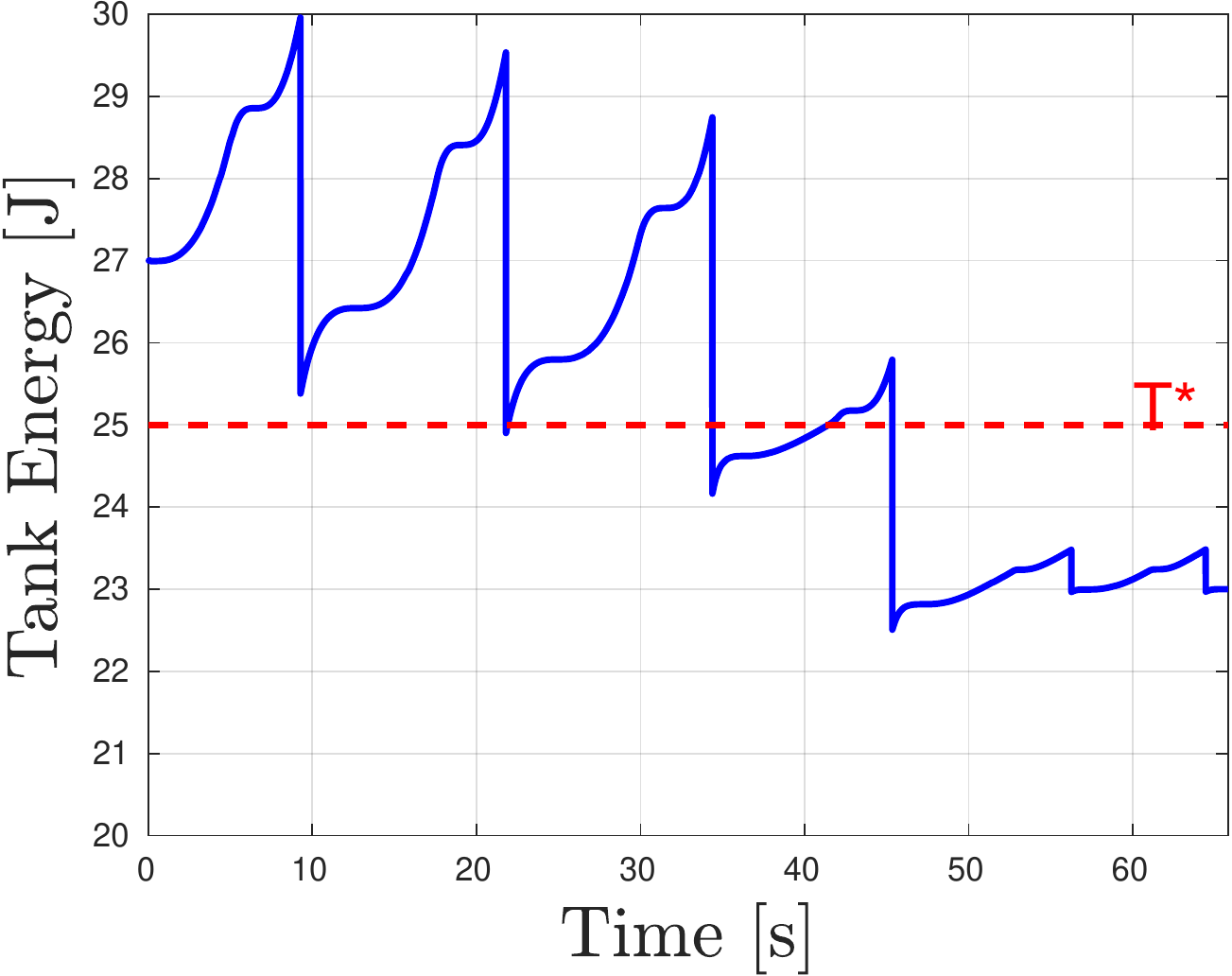}\label{fig: tank_damp_inj} }%
    \subfloat[\centering]{\includegraphics[width=.48\columnwidth]{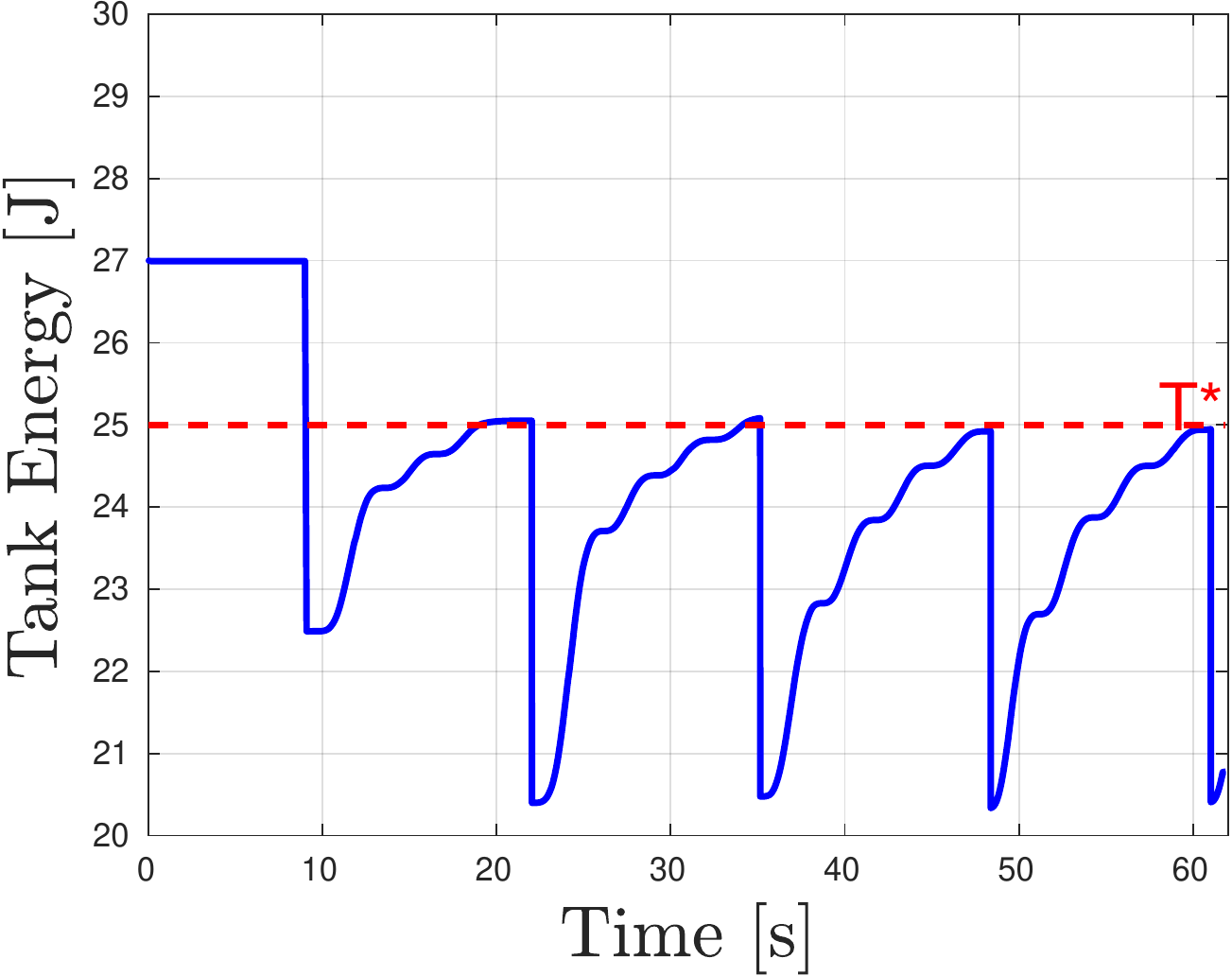} \label{fig: tank_null_damp}}%
    \caption{Evolution of the energy in the tank using damping injection (a) and null-space refill (b).}%
    \label{fig: Tank_damp_inj_vs_null}%
\end{figure}
In the simulation, the robot is employed to reproduce the behaviour in \eqref{eq: admittance model} while aiming at preserving a desired amount of energy $T^* = 25J$ in the tank. This is first accomplished using the standard tank-based admittance control in \eqref{eq: vadm_tank}, then with the proposed null-space refill technique in \eqref{eq: vadm_tank_null}.

We assume the \fb{admittance task} takes place only along the translational directions of motion. Thus, the primary task $x_1$ has dimension $m_1 = 3$ and its Jacobian $J_1$ is equivalent to the first $m_1$ rows of the geometric Jacobian $J$. Since $m = n = 6$, we can exploit the remaining $m_2 = 3$ DOFs for implementing the secondary task $x_2$.\\
Under these conditions, the selection matrix $G$ becomes
\begin{equation}\label{eq: matrix_G}
    G =
    \begin{bmatrix}
       \mathbb{I}_3\quad \mathbb{O}_3\\
    \end{bmatrix}
\end{equation}
where $\mathbb{I}_3$ is the 3D identity matrix and $\mathbb{O}_3$ is the 3D zero matrix.
The inertia and damping matrices $M(x_1, t)$ and $D(x_1, t)$ are initialised as diagonal matrices, whose elements have respectively the value of $6kg$ and $0.75 \frac{Ns}{m}$.

The external force $F_e$ presents only one non-null component $F_z$ with magnitude $2N$, applied along the vertical direction. The force cyclically inverts its sign. Whenever this happens, the value of $M(x_1,t)= diag(m_i(t))$ is changed, increasing or decreasing its elements by $3kg$ at each cycle.

As we can see from \eqref{eq: powers_def} and \eqref{eq: gamma_def}, increasing the inertia leads to an extraction of energy from the tank, while no energy is injected if the inertia decreases. Thus, since we aim at an energy level of $T^*$, a technique for refilling the tank is required.

First, the tank is refilled via the standard damping injection. Whenever the energy level $T(x_t(t))$ drops below $T^*$, the value of the elements of $D(x_1,t)$ is increased to $4 \frac{Ns}{m}$.

\begin{figure}[tb]
    \centering
    \includegraphics[width=0.75\columnwidth]{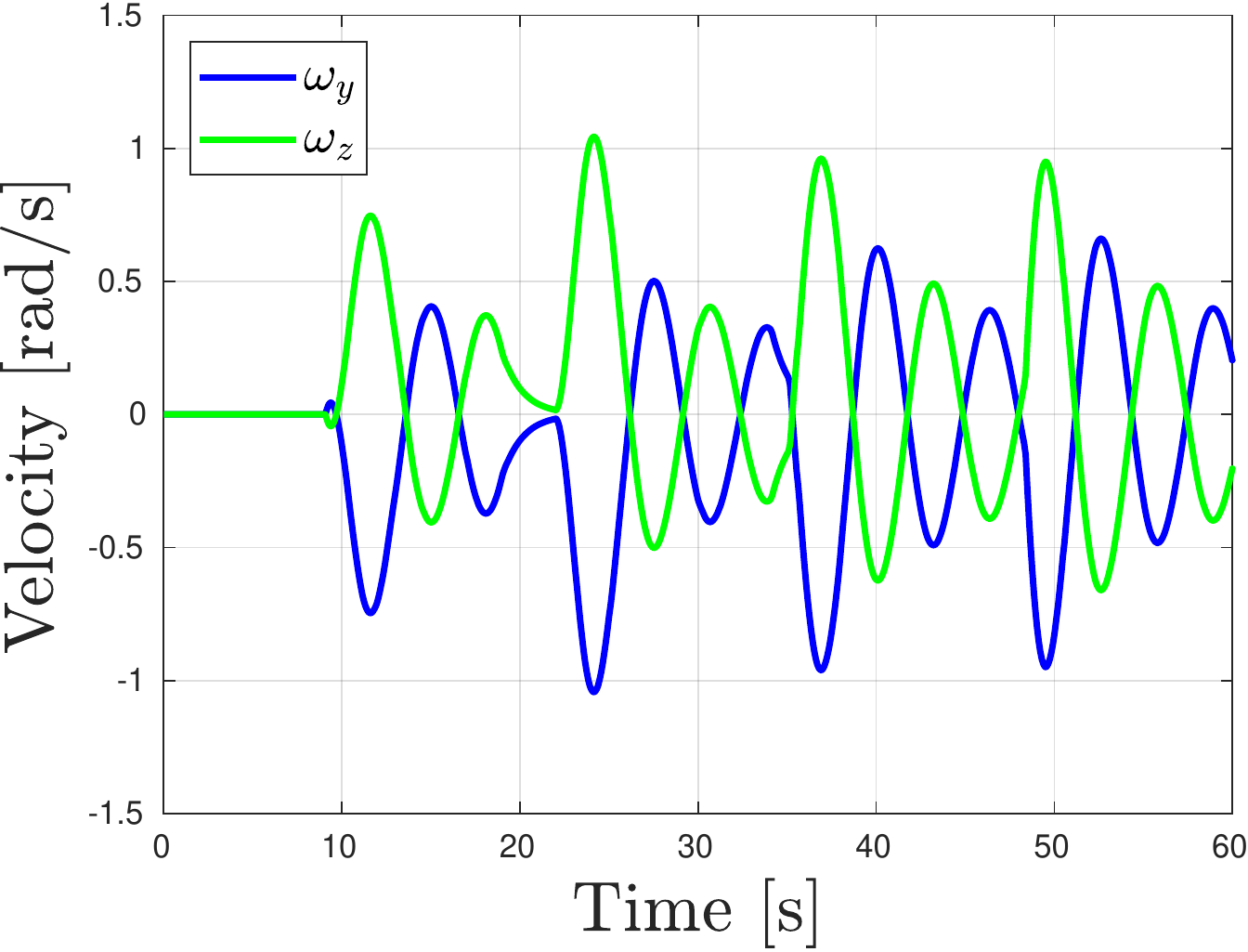}
    \caption{Velocities of the secondary motion using null space refill.}
    \label{fig: null_velocities}
\end{figure}


As can be noticed in Fig.~\ref{fig: velocity_damp_inj}, this solution severely affects the execution of the main task, \fb{resulting in a rough approximation of the desired motion after the damping injection}. Additionally, the energy level \fb{never rises back to $T^*$, due to the slower evolving nature of the dynamics, meaning that the additional damping would persist indefinitely} (see Fig.~\ref{fig: tank_damp_inj}).

The same simulation is then conducted using the proposed null-space refill technique. The null damping matrix $D_N(t) \in \mathbb{R}^{3 \times 3}$ is initialised as a diagonal matrix, with elements equal to $1 \frac{Ns}{m}$. The null force $F_{null} \in \mathbb{R}^3$ has components designed as periodic functions $h_i$, namely
\begin{equation}\label{eq: force_null}
    h_i = \gamma_i|T(x_t(t)) - T^*|(sin(\omega t))
\end{equation}
in which $\gamma_i \in \mathbb{R}$ is a proportional gain and $\omega = 1 \frac{rad}{s}$. The gains are chosen as $\gamma_1 = 0$, $\gamma_2 = 0.5$ and $\gamma_3 = -0.5$. Notice from \eqref{eq: null_force_def} that $F_{null_{i}} = 0$ if no refill is needed.

To highlight the capabilities of the technique, we set as the only refilling term in the tank the null power $P_N$, namely we impose $P_D = 0$ in \eqref{eq: vadm_tank_null}.

The results are showcased in Fig.~\ref{fig: velocity_null_damp} and Fig.~\ref{fig: tank_null_damp}. Whenever the energy level in the tank drops below $T^*$, the force $F_{null}$ is applied, allowing to damp onto the secondary motion $v_2$. \fb{Even without $P_D$, the tank is refilled up until the desired value $T^*$ after each extraction. After that, $h_i = 0\,N$, thus the null motion slowly stops, leading to $P_N = 0$}. The generated null-space motions are shown in Fig.~\ref{fig: null_velocities}.

From the kinematic decoupling of $x_1$ and $v_2$, we achieve that the main motion is unaffected by the secondary damped one, avoiding the introduction of spurious terms in the main dynamics. At the same time, it is always possible to inject energy in the tank, thus passivity is preserved at all times. 
\subsection{Experiments}
The experiment conducted onto the real robot is analogous to the second simulation. Here, the human operator controls the manipulator by means of the main admittance task, in order to paint lines on a paper sheet, using a brush attached to the end effector\fb{, while motions in the null space of the task serve the refill purpose. The operator is skilled and is aware of how the robot will behave during the null motions.}

The inertia, damping and null damping matrices are initialised with elements equal to $15kg$, $40 \frac{Ns}{m}$ and $7 \frac{Ns}{m}$ respectively. The gains $\gamma_{(.)}$ in \eqref{eq: force_null} are chosen as $\gamma_1 = \gamma_2 = 0$, $\gamma_3 = -0.5$, meaning the motion $v_2$ is restricted to the rotation of the end effector along its $z$ axis. \fb{Due to the choice of \eqref{eq: force_null}, the null motion only causes small deviations from the initial angular position of the end effector. Thus, the operator is minimally affected during the task execution.} As in the simulations, the inertia matrix is changed according to the sign of the force $F_z$, with variations of $\pm5kg$.

\begin{figure}[tb]
    \centering
    \includegraphics[width=0.75\columnwidth]{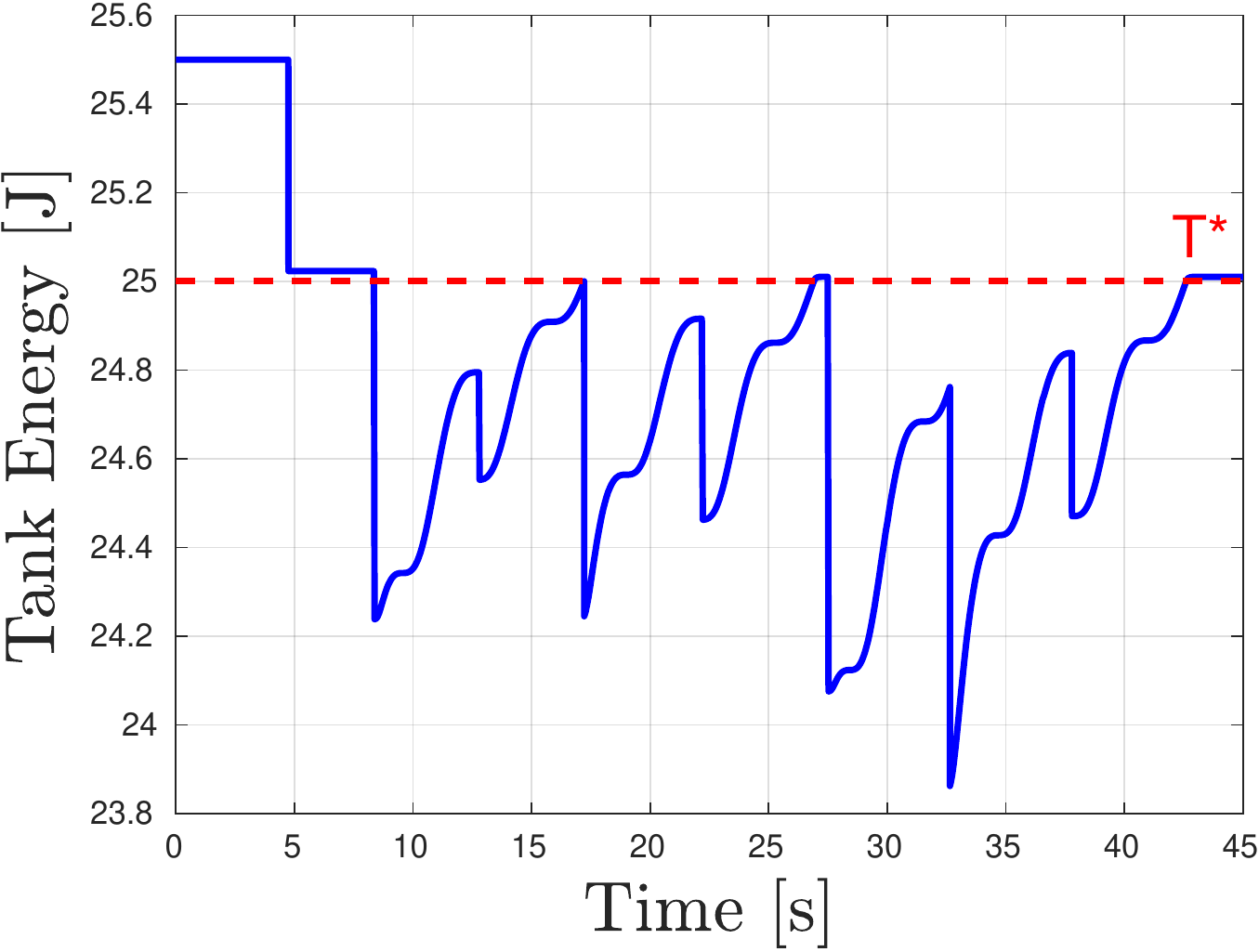}
    \caption{Evolution of the energy in the tank during the experiment}
    \label{fig: tank_exp}
\end{figure}
As previously, energy is extracted from the tank for compensating the inertia increase due to the change in orientation of the applied force. The activation of the null space dynamics allows to keep on implementing the desired dynamics. Thanks to the non-invasive nature of the motion, the operator is unaffected by the energy injection process, while the null-space refill refill ensures the preservation of passivity. The evolution of the energy in the tank during the experiment is portrayed in Fig.~\ref{fig: tank_exp}.
\section{Conclusions}
\label{sec:conclusions}
In this paper we have proposed a control architecture which exploits both task redundancy and energy tanks for passively implement a variable admittance controller. Through kinematic decoupling, we can generate damped motions in the null space of the main admittance task for refilling the energy tank whenever needed, thus avoiding the introduction of spurious effects onto the desired behaviour.

Given the projection-based nature of this approach, inequality and equality constraints, such as joint position limits or maximum permitted values of velocity and acceleration, cannot be incorporated. Future work aims at extending the framework proposed in the paper for including robot and task based constraints.


\bibliographystyle{IEEEtran}
\bibliography{IEEEabrv,mybib.bib}

\end{document}